\numberwithin{equation}{section} 
\newtheorem{theorem}{\sc Theorem}
\newtheorem{coro}{\sc Corollary}
\newtheorem{req}{\sc Requirement}
\newtheorem{defin}{\sc Definition}
\newtheorem{rem}{\sc Remark}
\newtheorem{cla}{\sc Claim}
\newtheorem{ex}{\sc Example}
\newenvironment{remark}{\begin{rem}}{\hspace*{\fill}$\Diamond$\end{rem}}
\newcommand\numberthis{\addtocounter{equation}{1}\tag{\theequation}}
\begin{document}

\title{A Kolmogorov metric embedding for live cell microscopy signaling patterns}
\author[1]{Layton Aho}
\author[1]{Mark Winter}
\author[1]{Marc DeCarlo}
\author[2]{Agne Frismantiene}
\author[2]{Yannick Blum}
\author[2]{Paolo Armando Gagliardi}
\author[2]{Olivier Pertz}
\author[1*]{Andrew R. Cohen}
\affil[1]{Electrical and Computer Engineering, Drexel University, USA}
\affil[2]{Institute of Cell Biology, Univ. of Bern, Switzerland}
\affil[*]{correspondence to andrew.r.cohen@drexel.edu}

\maketitle
\begin{abstract}
    We present a metric embedding that captures spatiotemporal patterns of cell signaling dynamics in 5-D $(x,y,z,channel,time)$ live cell microscopy movies. The embedding uses a metric distance called the normalized information distance (NID) based on Kolmogorov complexity theory, an absolute measure of information content between digital objects. The NID uses statistics of lossless compression to compute a theoretically optimal metric distance between pairs of 5-D movies, requiring no \emph{a priori} knowledge of expected pattern dynamics, and no training data. The cell signaling structure function (SSF) is defined using a class of metric 3-D image filters that compute at each spatiotemporal cell centroid the voxel intensity configuration of the nucleus w.r.t. the surrounding cytoplasm, or a functional output \emph{e.g.} velocity. The only parameter is the expected cell radii ($\mu m$). The SSF can be optionally combined with segmentation and tracking algorithms. The resulting lossless compression pipeline represents each 5-D input movie as a single point in a metric embedding space. The utility of a metric embedding follows from Euclidean distance between any points in the embedding space approximating optimally the pattern difference, as measured by the NID, between corresponding pairs of 5-D movies. This is true throughout the embedding space, not only at points corresponding to input images. Examples are shown for synthetic data, for 2-D+time movies of ERK and AKT signaling under different oncogenic mutations in human epithelial (MCF10A) cells, for 3-D MCF10A spheroids under optogenetic manipulation of ERK, and for ERK dynamics during colony differentiation in human stem cells.

{\em Index Terms}
Patterns of cell signaling, metric learning, kinase translocation reporters, ERK, AKT, embedding live cell microscopy movies, cytonuclear ratio
\end{abstract}

\section{Introduction}
A key open challenge in systems biology is to elucidate and manipulate cellular / environmental interaction mechanisms. For example, ERK and AKT are both kinases that feature prominently in the control of basic cellular functions such as motion, cell cycle, \emph{etc.} \supercite{Gagliardi2021,Gagliardi2023,Ender2022,Purvis2013}. The processes of interest are  high-dimensional (5-D) movies, taking place in 3-D space over time across multiple imaging channels. Here we present a new unsupervised metric embedding technique based on Kolmogorov complexity theory \supercite{Vitanyi2008} that represents each movie as a single point in a low dimensional space that preserves \emph{``any and all differences''} \supercite{Vitanyi2005} among the patterns in the input movies. 

Any computational analysis of live cell and tissue microscopy movies can be thought of as an \emph{embedding} that represents each movie by a point in some lower dimensional space. Consider an example embedding that represents each movie using the average voxel intensity across all frames. Each embedded movie would be a point on a 1-D line, with brighter movies towards the right and dimmer movies towards the left. Given two movies with average intensity of $I_1$ and $I_2$ respectively, construct a third movie with  intensity $I_3 = \frac{I_1 + I_2}{2}$. This example embedding is a \emph{metric embedding} meaning that $I_3$ will be embedded as a point exactly halfway between the points representing $I_1$ and $I_2$.  A second example embedding could train a deep learning network to classify movies as either bright or dark based on some ground truth and use the output values of individual neurons as dimensions of the embedding. With this embedding, $I_3$ will not be located halfway between the points representing $I_1$ and $I_2$, and we say that the embedding is non-metric. A metric embedding is required to measure and infer relationships between points in the embedded space such that the corresponding distance relationship is valid in the input space. For example, in section \ref*{ERK_signaling_HSC} we use the embedded space representation to determine if particular genetic manipulations significantly change the relationship between ERK signalling and the resulting cellular velocity. This analysis using spatial characteristics of the embedding representation to measure characteristics of the input movies is only possible because of the metric properties of the embedding. The embedding approach here is based on a metric distance based on Kolmogorov compression computed between pairs of 5-D movies. 

Kolmogorov complexity theory \supercite{Vitanyi2008} defines the \emph{information distance} \supercite{Vitanyi2009}, a perfect distance between any two digital objects, as the length in bytes of the shortest Turing machine program that outputs one object given the other as input. The information distance is universal in that it considers every possible Turing machine program as a compression, and is unsupervised and parameter free.  In theory, any point in the embedded space (not just the points corresponding to our input movies) would correspond optimally to the \emph{``true''} movie, the single movie from among all possible movies that corresponds to the given point in the embedding space. The information distance is not computable, and must be approximated. The normalized compression distance (NCD) is a metric distance that provides an effective approximation to the information distance\supercite{Vitanyi2005} using lossless compression. Throughout the remainder, we use NID and NCD interchangeably, with NID typically denoting the theoretical distance based on Kolmogorov complexity and NCD denoting the approximation using lossless 3-D image compression.

A key enabling technology for the present work is the Free Lossless Image Format (FLIF), a lossless 3-D image compression algorithm \supercite{FLIF}. We use the NCD with FLIF to find patterns of spatiotemporal visual similarity between 5-D movies. The key steps are shown in Figure \ref*{fig:figure0}. The input is a collection of 5-D microscopy movies, $(x,y,z,channel,time)$.  The movies are denoised and optionally segmented, tracked and lineaged (section \ref*{sect.SegTrack}).  Metric structure enhancing filters apply \emph{a priori} knowledge prior to compression. One contribution of the present work is the cell signaling structure function that combines the notion of Kolmogorov structure functions with metric blob enhancing filters \supercite{FRANGI}. The output of these metric image enhancing filters is input directly to the compression algorithm for 2-D+time movies, for 3-D+time movies we project the output to two spatial dimensions plus time. Given $N$ input movies, we compute the $NxN$ pairwise NCD distance matrix, and use the eigenvectors of that matrix to define our embedding space.

The approach described here results in a particularly desirable type of metric embedding known as a \emph{Reproducing Kernel Hilbert Space}\supercite{Manton2015} (RKHS). An RKHS embedding maintains Euclidean distance between all points in the embedding space w.r.t. the kernel distance on the image space. For example, with an RKHS the magnitude and direction of the vector between any two points in the embedding space is proportional to the vector between the corresponding points in the input image space. The maximum dimension of this RKHS is equal to the number of input movies. RKHS embeddings differ from other embedding approaches in that they preserve this so-called ``extrinsic geometry'' \supercite{Manton2015} at all points in the embedding space, not just the points corresponding to the input images. 

We propose the normalized information distance (NID) as the reproducing kernel for a Hilbert space embedding. In theory, this NID kernel captures all meaningful differences between image pairs to define a pattern space throughout the embedding, optimal in the mean square error sense of preserving dimensions corresponding to the largest eigenvalues of the pairwise distance matrix. In addition to being an optimal distance measure between arbitrary 5-D patterns, the underlying Kolmogorov complexity theory provides \emph{algorithmic structure functions}, metrics for how meaningful are the representations of the input or the embedded data based on compression characteristics (section \ref*{Sect.computeSSF}). Here we use the unsupervised cluster structure function \supercite{Cohen2023} as a scalar measure or indicator of how \emph{meaningful} a particular embedding, (section \ref*{Sect.CSF_methods}). For applications where labeled data is available, a key advantage of the RKHS is as a basis for subsequent supervised machine learning algorithms \supercite{Manton2015,Cohen2009,NM_cellFate,Cohen2023}. Training supervised learners in the RKHS results in reduced complexity and improved generalization compared to learning machines in the input image space \supercite{Manton2015}. This makes the RKHS especially well-suited for subsequent supervised (semi-supervised spectral learning \supercite{Kamvar}) or unsupervised (spectral clustering \supercite{Jordan}) machine learning algorithms. Throughout the remainder, the use of this NID kernel with the FLIF 3-D image compression is referred to as a Kolmogorov embedding of the 5-D input movies.

\begin{figure*}[!ht]
    {\includegraphics[width=1.0\textwidth]{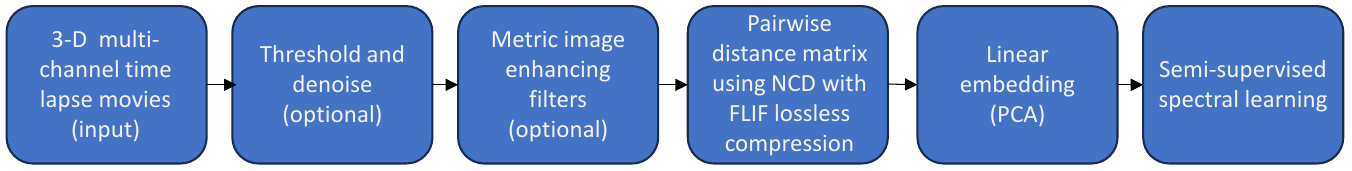}}
    \centering
    \caption[]{\textbf{The normalized information distance is a metric embedding kernel for spatiotemporal patterns of cell signaling.} Start with $N$ 2-D or 3-D multichannel time-lapse live cell microscopy movies. Threshold and denoise the movies, optionally segmenting, tracking and lineaging. Metric structure enhancing image filters compute cell signaling state, or functional outputs such as velocity. The cell signaling state is quantified by the intensity of the nuclear pixels w.r.t the surrounding cytoplasm. 3-D image compression is then used as a normalized pairwise distance metric between structure enhanced movies. The resulting distance matrix defines an optimal embedding based on visual differences among the input movies as captured by the normalized compression distance (NCD). Each input movie is represented as a single point in the embedding space. Importantly, all points in the embedded space, not just the ones corresponding to the input movies, optimally represent the pattern characteristics of a corresponding ``true'' input image. Supervised or unsupervised learning algorithms on the kernel embedding space are less complex and more generalizable \supercite{Manton2015}. No training data or other \emph{a priori} knowledge is required, although it can be utilized if available.}
    \label{fig:figure0}
  \end{figure*}

Figure \ref*{fig:figure1} shows a sample 2-D image frame from the ERK-KTR channel (A). This image is taken from a movie of cells with a mutation (PIK3CA\_H1047R) that exhibits a distinctive, visually obvious pulsing throughout the tissue monolayer. ERK-KTR activation increases as the nucleus darkens w.r.t. surrounding cytoplasm. We denoise the images, and then compute the cell signaling structure function (SSF) using metric blob enhancing filters. The output of the SSF is a 3-D image with the SSF response at each $(x,y,time)$ cell centroid (B). Pairs of these 3-D SSF output images are input to the FLIF compression. A 2-D projection of the 3-D SSF (C) provides an effective visualization of the ERK pulsing dynamics as diagonal yellow lines. Finally, the RKHS embedding of the 147 input movies from six different genetic conditions, each associated with distinctive patterns of ERK signaling dynamics \supercite{Jacques2021}, is shown in (D). Spectral learning theory tells us that because there are six different mutations in the input data the six principal dimensions of the RKHS should be preserved\supercite{Jordan}. To visualize the data, dimensionality must be reduced to 3-D (or less). The RKHS formulation automatically selects the  principal dimensions that optimally preserve the characteristics of the input image patterns, enabling improved visualization in the lower dimensional space. 

\begin{figure*}[!ht]
    \includegraphics[width=1.0\textwidth]{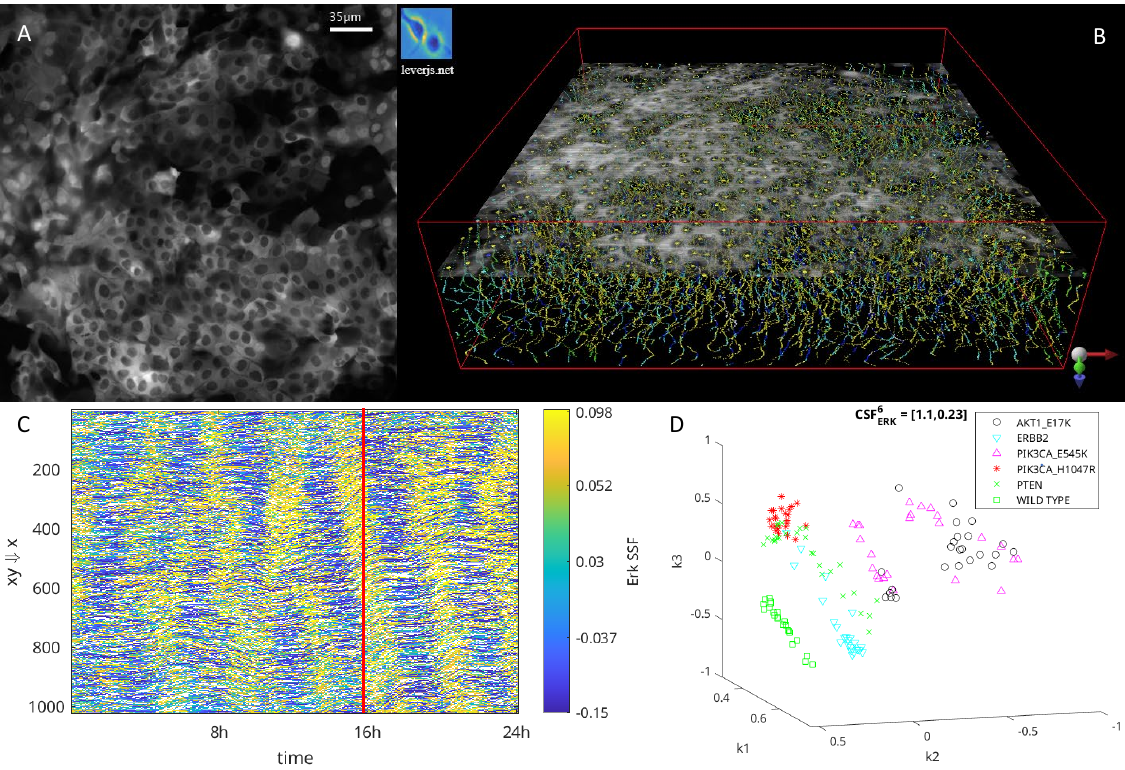}
    \centering 
    \caption[]{\textbf{A Kolmogorov embedding of live cell and tissue microscopy movies enables improved visualization and quantification of spatiotemporal patterns of cell signaling}. Time-lapse microscopy captured 147 movies from 6 different conditions, known to differ in ERK signaling dynamics, in 2-D monolayers of human breast epithelial (MCF10A) cells (A). In (B), we show the output of the cell signaling structure function (SSF) from the beginning of the movie to the image frame shown in (A), overlaid in gray. Each colored dot along the gray image represents the cell signaling activation at that location and time.  A maximum intensity projection of the 2-D+time SSF output (B) to a 1-D+time image (C) allows visualization of the waves of ERK signaling activation propagating throughout the tissue as diagonal yellow stripes. The red line in (C) indicates time of the image frame in (A, B). The Kolmogorov embedding (D) uses the normalized information distance with 3-D image compression to define a pattern space that reflects everywhere the similarity characteristics visually observable in the input movies. The cluster structure function (CSF) reported in (D) is an unsupervised measure of how meaningfully the embedding represents the six different ground truth classes, reported as [mean, standard deviation] of the per-cluster optimality deficiency for ERK signaling in the 6-D RKHS. See also Supplementary Movie \ref*{Movie1}. 
  }  
  \label{fig:figure1}
  \end{figure*} 

The introduction continues below with a description of the cell signaling structure function, and a very brief review of the related literature. Section \ref*{Sect.Results} details the results of our analysis for 2-D monolayers of human breast epithelial (MCF10A) cells from six different oncogenic mutations associated with distinctive changes in cell signaling patterns, for human stem cell colonies under self-renewing and differentiating conditions, for optogenetic excitation of MCF10A cells cultured in 3-D synthetic breast spheroids, and for a synthetic phantom dataset. Section \ref*{Sect.Discussion} gives some concluding observations on the approach and suggests avenues for future research. Section \ref*{Sect.Methods} details the methods for live cell imaging, for segmentation and tracking, for computing and quantizing the SSF, and calculating the cluster structure function in the RKHS.
\FloatBarrier 
\subsection{The cell signaling structure function}
We define a structure function as any metric function that reports how well a model fits the input data. This is equivalent to the Kolmogorov structure function that models any digital object $x$ using a finite set $S$ containing $x$, defined as $H(x)=\min\limits_{S}log(|S|),x \in S$. The idea is that if there are no ``\emph{simple special properties}'' \supercite{Vitanyi2004} that would allow $x$ to be specified within $S$ more efficiently than encoding its ordinal within the set then $S$ is an optimally meaningful representation for $x$. This definition of structure functions also admits the metric blob-, tube- and plate- enhancing filters introduced by Frangi et al. \supercite{FRANGI} where a Gaussian smoothing followed by a Hessian shape enhancement achieved remarkable results at detecting tube-like structures in noisy images. We build on this approach to introduce a metric blob-enhancing filter to capture cell signaling state for arbitrary nuclear markers in live cell microscopy.

We propose the cell signaling structure function (SSF) to quantify  the signaling state of each cell at each time and imaging channel.  The cell signaling state is defined by the intensity of the nuclear voxels w.r.t. surrounding cytoplasmic voxels. The nuclear intensity is defined by $n$, the surrounding cytoplasmic intensity is defined by $c$, with $c,n \in [0.0,1.0]$. The SSF is a 2-D vector valued function. The first dimension is non-zero when $c>n$, the second dimension is non-zero when $n>c$. We take as the model $M$ the case $c = n$, an absence of signal, and the SSF is defined as 
\begin{align*}
  \label{eqn.SSF_2D}
  \numberthis
  H_{SSF} &= <c-n,0> & c > n \\
&= <0,n-c> & c < n \\
&= <0,0> & c = n.
\end{align*}

The SSF starts at $0$ when the model fits perfectly with no signaling activity ($c = n$) and increases to its maximum value as signaling activity increases to $c = 1, n = 0$. This definition differs from both the original Kolmogorov structure function and the recently proposed cluster structure function\supercite{Cohen2023} in that is 2-dimensional. Another difference between the SSF compared to the Kolmogorov structure function and the CSF is that the SSF is not intended to optimize model parameters but rather used directly as a measure of model fit indicating cell signaling activation levels. A third difference is that the SSF is normalized to $[0.0,1.0]$ in each dimension. For convenience, we encode $H_{SSF}$ into a signed scalar value $\in [-1.0,1.0]$, 

\begin{equation}
  \label{eqn.SSF_1D}
  H_{SSF} = c-n.
\end{equation}
We use the formulation in \ref*{eqn.SSF_1D} throughout the remainder. Our implementation of the SSF uses a Laplacian of Gaussian (LoG) blob enhancing filter \supercite{HIP} whose response scales naturally to $[-1.0,1.0]$, to robustly approximate the $c-n$, as described in Section \ref*{Sect.computeSSF}. Equivalently, $C$ and $N$ could be measured using cell segmentation algorithms as in previous work \supercite{Ender2022,Gagliardi2021,Gagliardi2023}, although that is a more complex approach.

In order for an embedding of cell signaling dynamics to be considered metric, the function that computes the signaling state of the cell from the nuclear and cytoplasmic voxel intensities must be a metric function. For example, kinase translocation reporters (KTRs) \supercite{Regot2014,Kudo2017} are biosensors that offer improved imaging of multiple kinases simultaneously in living cells. The biosensor uses the concentration of the kinase in the nucleus \emph{vs.} the cytoplasm as a measure of signal activation. Current approaches to measuring activation of KTR biosensors such as the cytonuclear ratio $\frac{C}{N}$ \supercite{Regot2014} and its variants, \emph{e.g.} $\frac{C}{C+N}$, are non-metric and cannot be used in a metric embedding framework. A key risk with non-metric embeddings is that populations of input objects from different conditions can be biased with different distance characteristics \emph{w.r.t.} other  populations. Informally, if we perturb some given $(C,N)$ value by a fixed amount, we require the resulting activation level to also be perturbed by some fixed amount no matter what values $(C,N)$ take and this is not possible with the cytonuclear ratio and its non-metric variants. 

Figure \ref*{fig:ssf} shows the SSF in comparison to the cytonuclear ratio currently used to quantify KTR activation levels. Using a phantom image (A), we simulate KTR activation (B), with the SSF activation level (black line) precisely matching the ground truth activation percentage (yellow line). The magenta line represents the unnormalized LoG response, the red line shows the cytonuclear response, the green line shows the response for $\frac{C}{C+N}$. The cytonuclear ratio requires careful human supervision to choose the response range for KTR signal quantification. The ratio $\frac{C}{C+N}$ improves somewhat the cytonuclear ratio, but shows increasing error bias as the nucleus brightens. The SSF recovers exactly the true underlying signaling state throughout the sample activation region. In (C), the mapping between cytoplasmic and nuclear intensity is shown for the cytonuclear ratio, and in (D) for the SSF. Note how the linearity and normalization of the SSF enhances the color space mapping considerably compared to the cytonuclear ratio. For the 2-D monolayer movies of human breast epithelial (MCF10A) cells we compared the CSF results obtained from the SSF on the ERK channel to the results obtained using the cytonuclear ratio, and found the SSF resulted in a significantly lower CSF value indicating that the SSF extracted significantly more structure or meaningful information compared to the cytonuclear ratio ($p<3e-10$), see Section \ref*{sect.mcf10a_results}.

\begin{theorem}
  \label{theorem.SSF}
$H_{SSF}$ is a positive semi-definite function, making it an optimal mapping from the 2-D $(cytoplasmic,nuclear)$ intensity space to a single 1-D activation value. 
\end{theorem}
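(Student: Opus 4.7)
The plan is to read $H_{SSF}$ as a one-dimensional feature map $\phi : [0,1]^2 \to [-1,1]$ given by $\phi(c,n) = c-n$, and to exhibit the two-argument kernel it induces, namely $K_{SSF}\bigl((c_1,n_1),(c_2,n_2)\bigr) := \phi(c_1,n_1)\,\phi(c_2,n_2)$. Positive semi-definiteness in the Moore sense then reduces to showing that the Gram matrix generated by any finite collection of input pairs has nonnegative quadratic form. As a warm-up I would verify that $K_{SSF}$ is real-valued, symmetric, and bounded by $1$ on $[0,1]^2\times[0,1]^2$, so that it is a valid candidate kernel compatible with the RKHS/NCD framework already developed for the kymographs in Section \ref*{Sect.introSSF}.

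Next, for any finite set $\{(c_i,n_i)\}_{i=1}^m$ and real coefficients $\{a_i\}_{i=1}^m$, the outer-product structure of the kernel factors the quadratic form into a perfect square,
\begin{equation*}
\sum_{i,j=1}^m a_i a_j \, K_{SSF}\bigl((c_i,n_i),(c_j,n_j)\bigr) \;=\; \Bigl(\sum_{i=1}^m a_i (c_i - n_i)\Bigr)^{\!2} \;\geq\; 0,
\end{equation*}
which delivers positive semi-definiteness and shows that the induced RKHS is rank one, spanned by the single feature $\phi$. For the optimality clause I would argue that among all mappings $[0,1]^2\to\mathbb{R}$ satisfying the three structure-function constraints embodied in \eqref{eqn.SSF_2D} --- vanishing on the model diagonal $c=n$, the sign convention reporting which side of that diagonal the observation lies on, and normalization to $[-1,1]$ with extrema at $(1,0)$ and $(0,1)$ --- the map $c-n$ is the unique antisymmetric linear form. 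Any nonlinear alternative either inflates the rank of the induced outer-product kernel, breaks its PSD factorization, or fails to attain the normalization uniformly along the boundary of the unit square, so $c-n$ is the minimum-rank, sign-preserving, PSD choice.

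The main obstacle I expect is pinning down the precise variational sense of \emph{optimal} intended in the statement. If it means simply that $H_{SSF}$ induces the minimum-rank PSD kernel consistent with the sign and vanishing-diagonal requirements, the rank-one argument above suffices and the proof is essentially a one-line identity. If instead optimality refers to an extremal property over a broader class (e.g.\ the best $\ell^2$ projection onto a 1-D subspace of functions on $[0,1]^2$ respecting the Kolmogorov structure-function axioms), then a short additional variational argument will be needed to confirm that $c-n$ attains the extremum and that no other normalized antisymmetric form can match it; this is where the paper's definition of ``optimal'' needs to be firmed up before the proof can be closed.
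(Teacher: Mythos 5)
Your proposal is essentially correct but takes a genuinely different route from the paper. You read $H_{SSF}$ as a feature map $\phi(c,n)=c-n$ and prove positive semi-definiteness of the induced rank-one kernel $K=\phi\otimes\phi$ via the perfect-square identity for the Gram quadratic form; the paper instead reads $H_{SSF}$ as a \emph{distance to the model} $c=n$: for $c\geq n$ it identifies $c-n$ with the Euclidean distance between the $1$-D point $(c-n)$ and the origin, invokes positive semi-definiteness of the Euclidean distance, and then combines the $c<n$ branch using the fact that a sum of positive semi-definite functions is positive semi-definite, concluding that $H_{SSF}$ is an RKHS embedding of $(c,n)$. Your argument is the more rigorous of the two in the standard kernel-theoretic sense, but it buys less than you might want: the outer-product construction yields a PSD rank-one kernel for \emph{any} real-valued $\phi$ on $[0,1]^2$, so under your reading the theorem is true of every candidate map and confers no special status on $c-n$. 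The paper's distance-to-model reading is what is intended to carry the content (see the Remark immediately following the theorem, which equates positive semi-definiteness with being a metric distance from the no-signal configuration), even though that reading is itself stated loosely. On optimality, both proofs are thin: the paper simply asserts that optimality ``follows from the RKHS properties,'' while you at least identify the gap explicitly and sketch a uniqueness argument for $c-n$ as the normalized antisymmetric linear form vanishing on the diagonal --- that sketch, if completed, would actually be a stronger justification of the optimality clause than the paper provides.
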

\begin{proof}
Consider the case $c \geq n$ in Eqn. \ref*{eqn.SSF_2D}, given $c \in [0.0,1.0], n \in [0.0,c]$, $c-n$ is equivalent to the Euclidean distance between the 1-D point $(c-n)$ and the point $(0)$ and the result follows from the Euclidean distance being positive semi-definite. Combine the case $c<n$ using the sum of two positive semi-definite functions is positive semi-definite, making $H_{SSF}$ a reproducing kernel Hilbert space (RKHS) embedding of the $(c,n)$. The optimality of the mapping from $(c,n) \mapsto H_{SSF}$ follows from the RKHS properties \supercite{Manton2015}.
\end{proof}
\begin{remark}
Being a positive semi-definite function is equivalent to being a metric distance function. The cell signaling structure function can then be interpreted as a metric distance between the model of no signaling activation ($c=n$) and our given $(c,n)$ configuration. Being positive semi-definite seems a necessary condition for any structure function, but this has not been expressed explicitly in previous structure function definitions \supercite{Cohen2023,Vitanyi2004,Vitanyi2006}.
\end{remark}
\begin{figure*}[!ht]
    \includegraphics[width=1.0\textwidth]{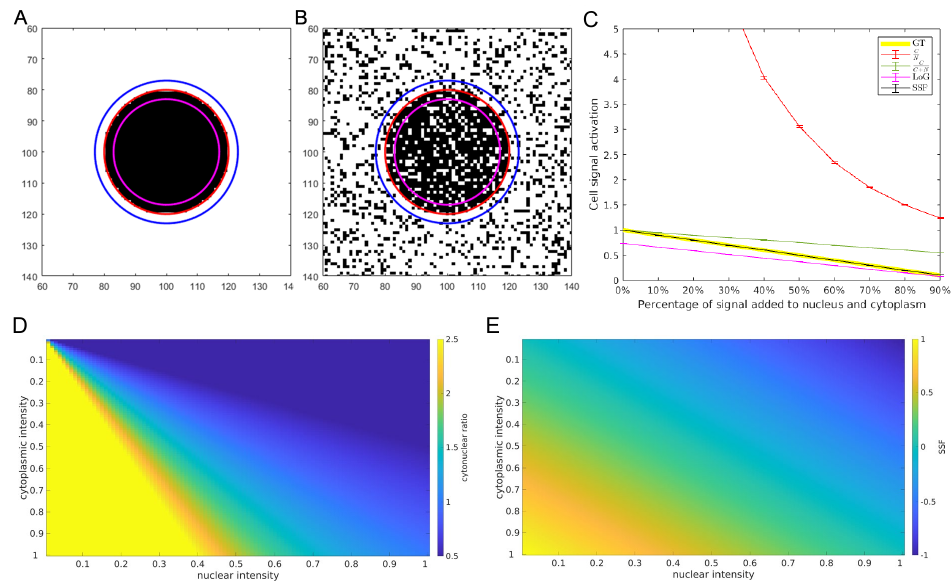} 
    \centering
    \caption[]{\textbf{The cell signaling structure function (SSF) is a metric function that combines nuclear \emph{vs.} cytoplasmic voxel intensity to form a 1-D scale space representation that peaks at the cell centroid.} Phantom image (A) shows dark nucleus against bright background. This represents a cell that is fully activated w.r.t. a KTR signal. The region between the red and magenta circles is used to estimate the nuclear intensity, the region between the red and blue circles is conventionally used to estimate the cytoplasmic intensity. Adding 40\% shot noise to the phantom image (B) such that ~20\% of the nuclear pixels are white and ~20\% of the cytoplasmic pixels are black simulates KTR signal decrease, representing a cell at 60\% activation. Varying the activation levels (C) shows the SSF (black line) accurately captures KTR signal activation across variations in nuclear and cytoplasmic intensity. The activation measure in (C) is shown with the ground truth (yellow line) varying linearly from 0.0 to 1.0 as simulated signal is added to the nucleus and subtracted from the cytoplasm. The Laplacian of Gaussian (LoG) (C) (magenta line) approximates the SSF without requiring explicit segmentation. Error bars in (C) represent standard error over 100 trials. Unlike the cytonuclear ratio ($\frac{C}{N}$), the SSF varies linearly from [0,1] as the KTR moves from nucleus to cytoplasm. The related ratio $\frac{C}/{C+N}$ is better behaved compared to the cytonuclear ratio, but the error rate increases as the nucleus brightens. Because both $\frac{C}{N}$ and $\frac{C}{C+N}$ are non-metric functions they cannot be used in a metric embedding. (D) illustrates the non-metricity of the cytonuclear ratio, where a fixed step size in varying locations and directions results in very different changes in the output activation level. The SSF response (E) varies linearly with nuclear and cytoplasmic intensity and is bounded on $[-1.0,1.0]$.}
    \label{fig:ssf}
  \end{figure*} 
\subsection{Related Literature}
The two key contributions presented here are the cell signaling structure function (SSF), and the use of the metric normalized compression distance (NCD) \supercite{Vitanyi2005} as a reproducing kernel for a Hilbert space embedding of live cell signaling movies.  The SSF is a metric for cell signaling activity from kinase translocation reporters (KTRs). KTRs are fluorescent reporters that utilize nucleocytoplasmic shuttling to measure kinase activity in a single cell\supercite{Regot2014,Kudo2017}. KTRs have been developed for many signaling pathways, such as ERK (ERK-KTR), AKT (Fox01-FP), and Cd2k (DBH)\supercite{Kudo2017}, a cell cycle indicator. The current approach for measuring the estimated signal activity from the KTR is to calculate the ratio of cytoplasmic versus nuclear fluorescence intensity ($\frac{C}{N}$ ratio) from the reporter \supercite{Regot2014,Kudo2017,Gagliardi2021,Jacques2021,Ender2022,Gagliardi2023}. The nuclear to cytoplasmic ratio $\frac{N}{C}$ and variants like $\frac{C}{C+N}$ have also been proposed. All of these are non-metric functions. This means when the signal is perturbed, the magnitude of the response is dependent upon the direction within the $(C,N)$ space, violating the triangle inequality\supercite{Theodoridis2009}. In contrast, the SSF is an optimal 1-D representation of the 2-D $(cytoplasmic,nuclear)$ intensity space as shown in Theorem \ref*{theorem.SSF}.
 
Calculating the $\frac{C}{N}$ ratio requires the determination of both the cytoplasmic ($C$) and nuclear ($N$) intensities for a given cell. This is typically done by estimating the boundary between nucleus and cytoplasm, and then estimating the voxels intensities interior ($N$) and exterior ($C$) as described above and shown as the ring around the nucleus in Figure \ref*{fig:ssf} (A). The implementation of the SSF presented here (as outlined in \ref*{sect.computeSSF}) only requires a centroid and is more forgiving of less accurate boundary estimations. It is easier to find cell centroids than to compute an accurate boundary, as can be seen from the higher detection vs. segmentation accuracy on the Cell Tracking Challenge datasets \supercite{NM_ctc}. Given the nuclear and cytoplasmic intensities for computing $\frac{C}{N}$ or $\frac{N}{C}$, it is straightforward to compute the SSF, or the SSF can be approximated using the Laplacian of Gaussian imaging filter as in the present work. 

The Kolmogorov embedding approach described here is unique in its unsupervised, metric RKHS representation. The most widely used alternatives to this approach include neural network latent spaces and / or non-metric embedding methods like t-Stochastic Neighbor Embedding (t-SNE) and UMAP. Neural networks are inherently non-metric due to the activation function at the perceptron. Visualizing a neural network latent space as a method for understanding how the network is partitioning the data may be qualitatively interesting, but any measurement taken on such an embedding is not valid w.r.t. the properties of the input data. Other approaches include CODEX, finding similar patterns of cell signaling from individual cell trajectories of cytonuclear ratios input to a convolutional neural network (CNN) and the resulting CNN features are embedded using t-SNE for visualization and classification\supercite{Jacques2021}. Another approach uses the ARCOS algorithm, segmenting and tracking collective signaling events by thresholding cytonuclear intensities against a minimum size of collective events\supercite{Gagliardi2023}. These approaches are intended to identify specific types of signaling events and are useful in combination with the quantitative embedding for cell signaling pattern discovery techniques proposed here.


\section{Results}
\label{Sect.Results}
\subsection{ERK and AKT signaling in 2-D+time monolayer of human breast epithelium}
\label{sect.mcf10a_results}
Live movies of human breast epithelial cell monolayers were captured in six different imaging experiments. Each experiment captured four or five movies from each of wildtype cells plus five different mutations for a total of 147 movies. The movies contained an H2B nuclear marker, as well as ERK and AKT KTR reporters on three separate channels. The movies were captured at 5 minutes per frame for 24 hours. Each frame was $1024x1024$ pixels. Figure \ref*{fig:figure1} shows an example image frame from this data (A), the 2-D+time SSF with a single image frame in gray (B), the 1-D+time spatial maximum intensity projection as a 2-D pattern visualization (C) and the embedding of the 147 movies as a 3-D RKHS or pattern space (D). The mutations chosen for this application exhibit distinct patterns of ERK signaling \supercite{Jacques2021}, as seen in (C) as diagonal yellow stripes of ERK activation across the monolayer. These distinct signaling patterns are captured by the FLIF compression as can be seen by the segregation of the representative points for each movie in the RKHS (D). We compute the cluster structure function (CSF) in the RKHS embedding, preserving the principal six dimensions of the RKHS embedding as there are six ground truth classes. The CSF is computed against the ground truth classes, in the RKHS as the distance from each point to the centroid of the mutation associated with that movie, as in Section \ref*{Sect.CSF_methods}. For ERK, the CSF is $[1.1,0.23]$ (the first value is the mean CSF across the 147 movies, the second value is the standard deviation). For AKT, the CSF is $[1.3,0.18]$, significantly higher ($p<1e-10$) compared to the ERK CSF, indicating that AKT is significantly less informative compared to ERK w.r.t. the observed signaling patterns.

The SSF can also be used with segmentation and tracking algorithms, as in Section \ref*{sect.SegTrack}, to capture cellular velocity patterns by generating an image with the average velocity for each cell from time $t-1$ to $t$ and $t$ to $t+1$ stored at each $(x,y,t)$ cell centroid. The velocity is normalized to the maximum cellular velocity gate value used in the tracking algorithm (\ref*{sect.SegTrack}). For each of the 147 movies, we compute the NCD between the ERK SSF and the velocity SSF. Figure \ref*{fig:veloPlot} (A) shows the resulting NCDs between the ERK and velocity SSF as a multiple comparison test grouped by experimental condition. Lower values of the NCD indicate that the ERK signal is more predictive of the cellular velocity pattern. For the PIK3CA\_H1047R and PIK3CA\_E545K mutations, there is a significant increase in the relationship between ERK Signaling and cellular velocity ($p<0.003$). These results agree with previous (non-metric) embedding results \supercite{Jacques2021}.

The pattern differences among the six different mutations can be seen in Supplementary Figure \ref*{Fig.Supplment.ssf_kymos}, where each row represents a different mutation. 

\subsection{ERK signaling in colonies of human stem cells}
\label{ERK_signaling_HSC}
Live movies showing colony development of human stem cells (hSCs) were captured. Self-renewing cells are hSCs that divide to produce other hSCs, differentiated cells are formed as the cells progress towards a neural fate. The movies were labeled with ERK-KTR and H2B (nuclear reporter). Ten 2-D movies of self-renewing colonies and ten of differentiated colonies were captured. Supplementary Figure \ref*{Fig.Supplement.HSC_kymos} shows 2-D projections of the ERK SSF for the 10 differentiated movies (A) and the self-renewing movies (B).  The cluster structure function (CSF) for these 20 movies was $[1.1,0.2]$, and the embedding did not indicate clear separation between ERK signaling patterns for self-renewing \emph{vs.} differentiated movies. Additionally, there was no statistically significant change in the relation between ERK and cellular velocity between the self-renewing and differentiated colonies, as shown in Figure \ref*{fig:veloPlot} (B). This result agrees with a biological ground truth that will be published separately.

\begin{figure*}[!hbt]
    \includegraphics[width=\textwidth]{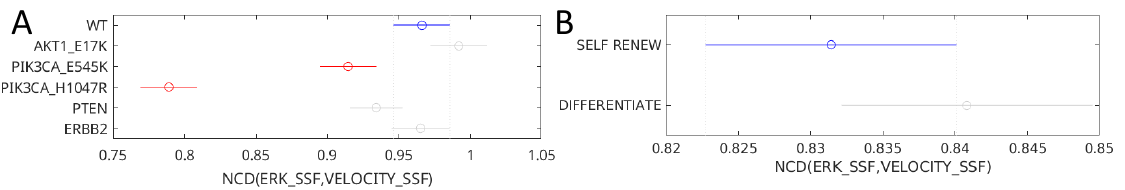}
    \centering
    \caption[]{\textbf{Quantifying the relationship between ERK signaling dynamics and cellular velocity patterns.} Multiple comparison plots showing normalized compression distance between SSF outputs for MCF10A human breast epithelial cell movies grouped by mutation (A) and human stem cells (hSCs) grouped by self-renewing vs. differentiated  (B). ERK and normalized velocity SSF images are computed for every cell in every movie. The normalized information distance  (NID) between ERK and velocity SSF images measures how informative the ERK dynamics are in representing the velocity. Lower NID values (as in e.g. PIK3CA\_H1047R) indicate that there is a stronger relationship in that condition between ERK signaling patterns and cellular velocity. Whiskers indicate 95\% confidence intervals from standard error of the mean. 148 movies from 5 experiment dates were analyzed for the MCF10A cells (A), 10 movies each from self-renewing and differentiated were analyzed for hPSCs (B). In the MCF10A cells, the PIK3CA\_H1047R and PIK3CA\_E545K show significant increases in the relationship between ERK and velocity compared to wild type. In the hPSC cells, there was no significant change in the relationship between ERK and velocity between self-renewing and differentiated cells.}
    \label{fig:veloPlot}
  \end{figure*}

\subsection{Optogenetic excitation of 3-D+time human breast epithelial spheroids}

Optogenetic excitation of ERK signaling of 3-D breast spheroids during live imaging is used to quantify the relationship between ERK signaling patterns and the resulting cellular velocity. Ten movies were captured, showing each 3-D spheroid prior to and during optogenetic excitation. These movies were used previously \supercite{Ender2022}, finding when cells are pulsed with optogenetics (from 4 to 8 hours), cells do less collective rotation. Here we explore more closely the relation between ERK signaling patterns and the resulting cellular velocity. The ERK and velocity SSF images are shown in Supplementary Figure \ref*{Fig.Supplement.opto_kymos}. Using the same method as in section \ref*{sect.mcf10a_results} finds that the spatial ERK signaling pattern is highly linearly correlated with the resulting cell velocity pattern ($\rho=0.99, p<1e-12$), as shown previously \supercite{Gagliardi2023,Ender2022}. 

Ten movies of MCF10A spheroids were imaged at 5 minutes per frame in 3-D (512x512x135 voxels) for $\sim$12 hours (144 frames). ERK and H2B were imaged. Eight movies had four hours of non-excitation, followed by four hours of excitation every 30 minutes and then four hours of non-excitation. Movie 2021719\_6h had six hours of non-excitation, followed by six hours of excitation every 30 minutes and then two hours of non-excitation. Movie 20210721\_2h had two hour intervals for pre-, during- and post-excitation. Each movie had separate SSF images for pre- and excitation.  Figure \ref*{Fig.opto} shows average velocity pre- and excitation. Supplementary Figure \ref*{Fig.Supplment.ssf_kymos} shows the 20 pre- and excitation movies in the NCD-RKHS. Each movie is represented by one point for pre- and one for excitation. In Figure \ref*{Fig.opto} (A) and (B), note the lines connecting these points. The length of these lines represents the visual difference in signaling patterns between each pre- and excitation movie pair as measured by the NCD. We find that the length of each line in the ERK RKHS is highly linearly correlated with the length of the same line in the velocity RKHS ($\rho=0.99, p=1e-12$). Movie 20210720\_4h imaged a spheroid that shifted partially out of frame between pre- and excitation. The resulting distances in the embedded space were large due to the change in visual appearance. Interestingly, the magnitude of the Euclidean distances in the RKHS for velocity and ERK still follow the same linear relationship seen with the other nine movies.  The RKHS here is key in enabling the Euclidean distance in the embedded space to measure quantitative relationships on the input image space.


\begin{figure*}[]
    \centering
    \begin{subfigure}[b]{0.31\textwidth}   
      \includegraphics[width=\textwidth]{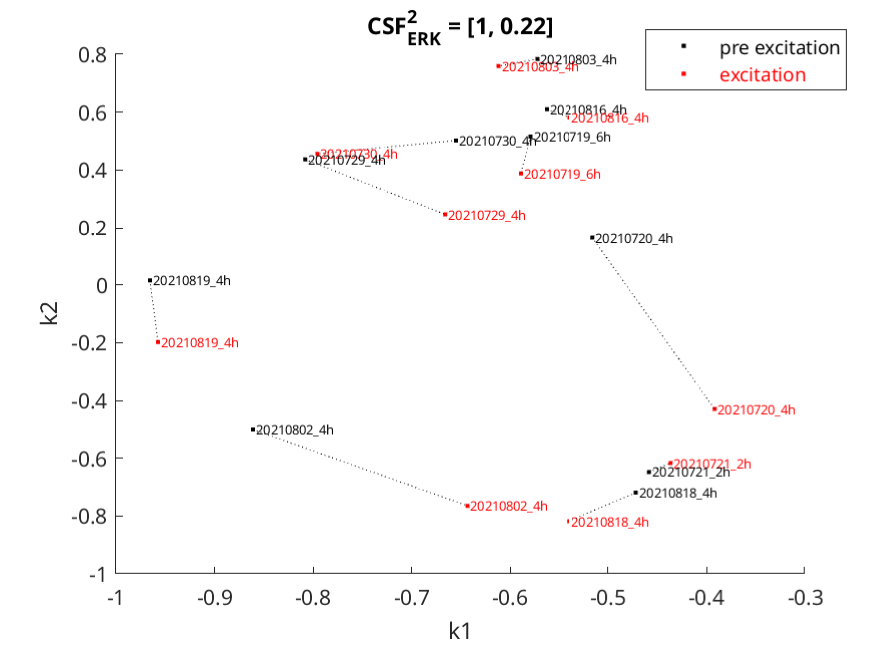}
      \caption{}   
    \end{subfigure} 
    \begin{subfigure}[b]{0.31\textwidth}    
    \includegraphics[width=\textwidth]{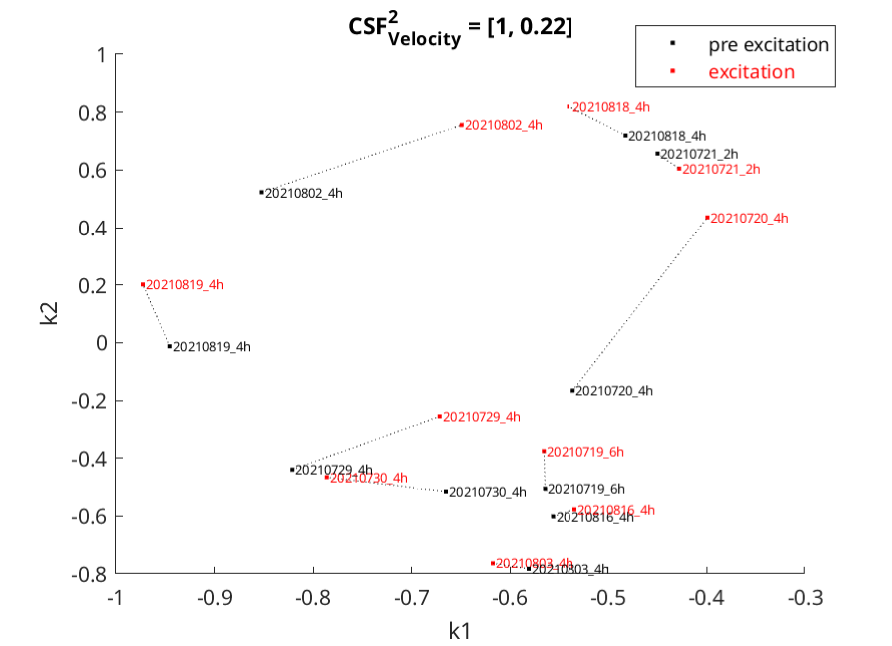}
    \caption{}
    \end{subfigure}
    \begin{subfigure}[b]{0.31\textwidth}    
    \includegraphics[width=\textwidth]{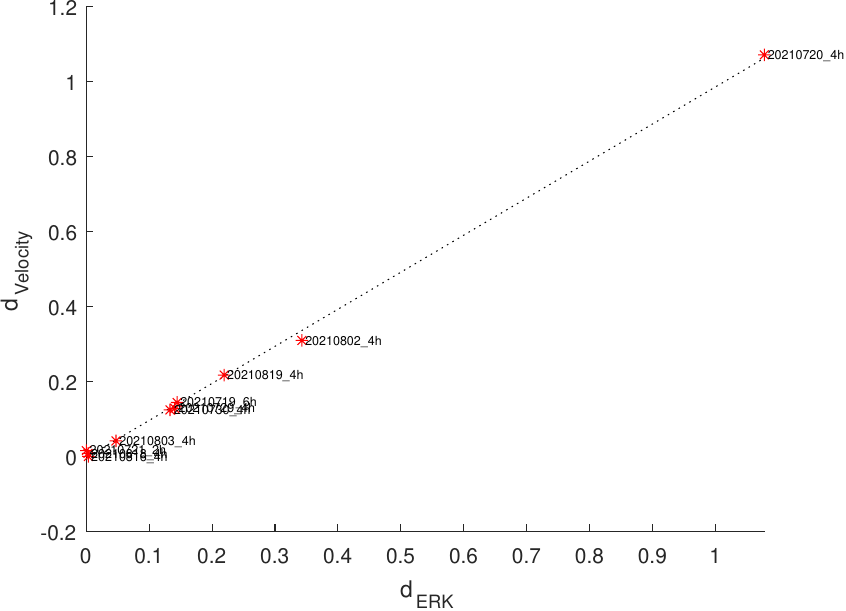}
    \caption{}
    \end{subfigure}
   
    \caption{ 
      \textbf{The Kolmogorov embedding measures correlation between ERK spatiotemporal signaling patterns and the associated cellular velocity}. Each of ten movies $X_1..X_{10}$ are split into 20 pre-excitation and excitation movies $\{X_1^{pre},X_1^{excite}...X_{10}^{pre},X_{10}^{excite}\}$. We project the 3-D+time SSF images to 2-D+time for ERK and velocity for each movie and then compute the pairwise NCD matrix and its resulting embedding in a 2-D RKHS $(k1,k2)$. Because there are two classes (pre- and during-excitation), the CSF value for each embedding is computed in the 2-D space using the ground truth labels to show slightly more structure from the ERK (a) \emph{vs.} velocity (b) embeddings. In (A, B), lines are drawn between pre- and post- excitation image pairs. The linear correlation between the difference magnitude for each pre- vs. excitation movie in ERK vs. velocity (c) is $\rho=0.99$ ($p=1e-12$). The change in ERK signaling pattern induced by optogenetic manipulation is strongly predictive of the resulting change in cellular velocity. 
    }
    \label{Fig.opto}
    
  \end{figure*}

\subsection{Synthetic spatiotemporal signaling patterns} 
Synthetic SSF images simulating 2-D+time constant velocity cellular motion for three simulated classes were created to characterize spatiotemporal pattern extraction. Each synthetic SSF comprises ten cell trajectories moving with randomly generated constant velocity per trajectory normally distributed with a per class mean $\in [1,3,5]$ and standard deviation $\sigma=0.5$. Figure \ref*{fig:phantomKymographs} shows maximum intensity projections of sample phantom SSF image for each velocity class (A, B, and C). At each cell centroid the velocity value is recorded on one channel and a random value $\in [1,255]$ is recorded on a second channel. 100 synthetic SSF images were generated per class.

 Pairs of the synthetic 3-D SSF images are input to the FLIF 3-D compression to compute the pairwise NCD matrix, first using the  velocity channel. This is repeated with the random value channel to create two RKHS embeddings, one for constant velocity values and one for random values. The cluster structure function (CSF) is computed in the RKHS embeddings, preserving the principal three dimensions of the RKHS embedding as there are three ground truth classes. The CSF is computed against these ground truth classes. For the constant velocity SSF images (D), the CSF is $[0.91,0.052]$ (the first value is the mean CSF across the three clusters, the second value is the standard deviation). For the random value SSF images (E), the CSF is $[1.3,0.24]$, indicating that embedding preserves some class structure but captures less meaningful information compared with the constant velocity SSF images.

\begin{figure*}[!hbt]
  \centering 
  \includegraphics[width = \textwidth]{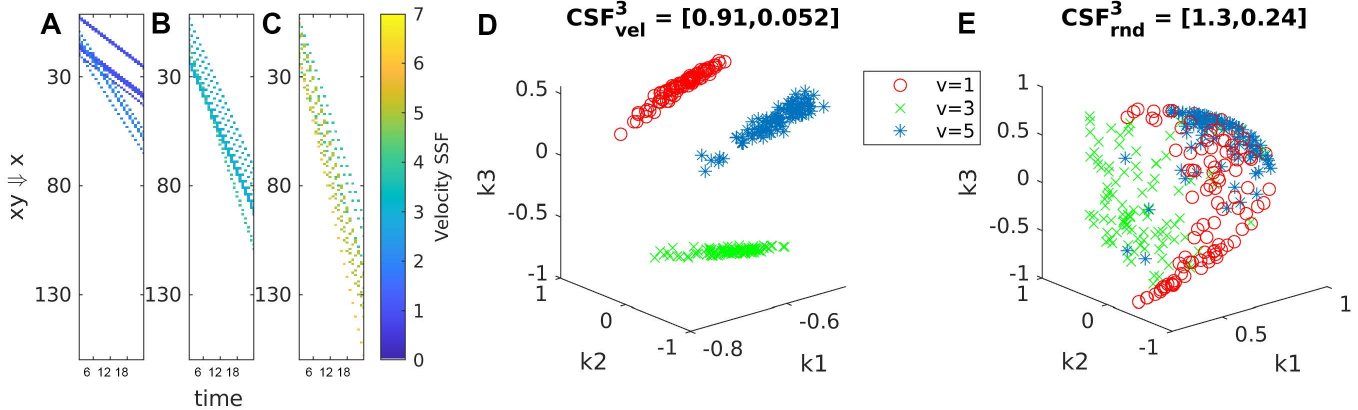}   
  \caption[]{\textbf{Synthetic SSF images characterize spatiotemporal pattern extraction.} SSF images were generated from simulated 2-D+time cellular motion for constant velocity tracks. These SSFs store the velocity on channel one and a random value on channel two at each cell centroid. Three different velocity classes were simulated (A, B, and C). This embedding for the velocity signaling channel (D) shows clear separation among the classes. The random channel embedding (E) preserves some class structure but is a significantly less meaningful embedding as measured by the cluster structure function (CSF). The CSF is reported as the mean and standard deviation of the per cluster optimality deficiency against the ground truth labels.
  }
  \label{fig:phantomKymographs}
\end{figure*}

\section{Discussion}
\label{Sect.Discussion}

The Kolmogorov embedding described here defines a metric pattern space for a collection of 5-D live cell microscopy movies. The normalized information distance (NID) kernel function defines the entire embedding space using lossless 3-D image compression to capture spatio-temporal pattern similarity. Treating non-metric embeddings such as the cytonuclear ratio or neural network latent spaces as metric spaces can introduce statistical bias among different subpopulations based on either physical characteristics or training data and should be avoided. The key advantage of a metric (RKHS) embedding is that the Euclidean distance between any points throughout the embedding space optimally, by decreasing distance matrix eigenvalue, represents the true difference between the corresponding points in the input space.  

The results presented here are intended to illustrate the types of measurements that can be made accurately via metric embeddings. For example, the cluster structure function is more easily computed in the embedding space compared to the input image space to score how meaningfully a particular embedding captures either ground truth (supervised) or clustered (unsupervised) object labels. We do not know of any existing metric embeddings for 5-D live cell signaling movies. The results here agree generally with human visual and other biological (non-metric) characterizations. Due to the lack of quantitative ground truth for the applications considered in the present work we utilize here only unsupervised analyses. One goal in the present work is to demonstrate the utility and possibility of metric embeddings for these types of dataset, and to encourage the development of improved and more accurate lossless compression metrics. Another goal is to demonstrate the need for structure functions as metric filters on image data as required for subsequent metric embedding.

For the 2-D human breast epithelial cell (MCF10A) movies, the RKHS embedding captured the pattern differences among six different oncogenic mutations associated with distinct signaling patterns, with the CSF measuring the structure was significantly higher due to ERK compared to AKT. The NCD kernel measured directly the relation  between ERK signaling and cellular velocity patterns, and found significant increases in the relationship for the PIK3CA\_H1047R and PIK3CA\_E545K mutations. Under these mutations, the ERK pathway is associated more closely with cellular velocity changes, an observation that agrees with empirical visualization. For colonies of human stem cells, it was found that there was no significant difference in ERK dynamics between self-renewing and differentiating colonies, and no change in the relation between ERK and velocity. Using optogenetic excitation while filming live in the microscope, with 3-D MCF10A organoids we see that the magnitude of the change in ERK signaling pattern during excitation is almost perfectly predictive of the change in cellular velocity $\rho=0.99$ ($p=1e-12$). Finally, a synthetic dataset using velocity and random signaling shows the sensitivity in measuring the relation between motion patterns and signaling dynamics.


Kolmogorov complexity theory tells us that as we improve our lossless image compression algorithms, the accuracy of the normalized compression distance in approximating the true Kolmogorov distance between objects will improve. Moving forward, an extension of FLIF to 4-D would allow processing of 3-D+time SSF images with no need for dimensionality reduction. Another area for possible improvement is in the quantization step. The current approach is to quantize the SSF output image from a floating point on $[-1.0,1.0]$ to an 8-bit unsigned integer representation on $[1,255]$. In other work using the NCD with time-lapse microscopy movies has benefited from varying the number of quantization symbols\supercite{NM_cellFate,Cohen2009}. An optimization search on the number of quantization symbols for the movies like those processed here would be time consuming but possible in future work. Another computationally demanding but useful feature would be to generate random reference SSF images  by writing random values at the cell centroid locations and iterating to quantify the contribution of motion patterns \emph{vs.} signaling dynamics in the embedding structure as measured by the CSF. 

\section{Software and data availability}
All of the software tools used are available free and open source, see \url{https://git-bioimage.coe.drexel.edu/opensource/ssfCluster}.
The image data together with segmentation and tracking results can be viewed interactively at \url{https://leverjs.net/ssfCluster}. The LEVERSC 4-D WEBGL viewer \cite[]{Winter} renders 3-D SSF and raw images, and the web API also supports downloading metadata and results directly.

\section{Methods}
\label{Sect.Methods}
\subsection{Live cell imaging}
\subsubsection{Human breast epithelium (MCF10A) monolayers}

Wild-type human mammary epithelial cells MCF10A cells were a gift of Joan S. Brugge, Harvard Medical School, Boston, MA. AKT1-E17K, PIK3CA E545K, PIK3CA H1047R knockin, and MCF10A-PTEN deletion (-//-) knockout derivatives of parental MCF10A cell line\supercite{Gustin2009} was a gift of Ben Ho Park, Johns Hopkins University, USA. ErbB2 overexpressing MCF10A cell line was generated by lentiviral transduction of pHAGE-ERBB2 construct (a gift from Gordon Mills \& Kenneth Scott, Addgene plasmid \#116734\supercite{Ng2018}). Transduction was performed in the presence of 8 $\mu$g//ml polybrene (TR1003, Sigma) in MCF10A WT cells already expressing H2B-miRFP703 and ERK-KTR-mTurquoise2 biosensors. Cells were selected with 5 $\mu$g//ml puromycin (P7255, Sigma).

MCF10A cells were cultured in growth medium composed by DMEM:F12 supplemented with 5\% horse serum, 20 ng/ml recombinant human EGF (Peprotech), 10 mg/ml insulin (Sigma-Aldrich/Merck), 0.5 mg/ml hydrocortisone (Sigma-Aldrich/Merck), 200 U/ml penicillin and 200 mg/ml streptomycin. All the experiments were carried out in starvation medium consisting of DMEM:F12 supplemented with 0.3\% BSA (Sigma-Aldrich/Merck), 0.5 mg/ml hydrocortisone (Sigma-Aldrich/Merck), 200 U/ml penicillin and 200 mg/ml streptomycin. Cells were growth factor and serum starved by removing growth medium, washing the monolayers 2 times with PBS and adding starvation media. 

The stable nuclear marker H2B-miRFP703 was a gift from Vladislav Verkhusha (Addgene plasmid \#80001)\supercite{Shcherbakova2016}, and subcloned in the  PiggyBac plasmid pPBbSr2-MCS. ERK-KTR-mTurquoiose2 and ERK-KTR-mRuby2 sequences were synthesized (GENWIZ) by fusing the ERK Kinase Translocation Reporter (ERK-KTR)\supercite{Regot2014} CDS with mTurquoiose2\supercite{Goedhart2012} and mRuby2\supercite{Lam2012} CDSs, respectively. FoxO3a-mNeonGreen sequence was synthesized (GEN- WIZ) by fusing the 1-1188 portion of the homo sapiens forkhead box O3 a (FoxO3a) CDS with mNeonGreen CDS, a green fluorescent protein derived by Branchiostoma lanceolatum\supercite{Shaner2013}. ERK-KTR-mTurquoiose2 and FoxO3a-mNeonGreen were cloned in the PiggyBac plasmids pMP-PB, pSB-HPB (gift of David Hacker, Lausanne\supercite{Balasubramanian2016}) or pPB3.0.Blast, an improved PiggyBac plasmid generated in Olivier Pertz's lab. For stable DNA integration PiggyBac plasmids were transfected together with the helper plasmid expressing the transposase\supercite{Yusa2011}. To generate cell lines stably expressing nuclear marker and biosensors, transfection was carried out with FuGene (Promega).

Stable clones expressing biosensors were selected using Puromycin (P7255, Sigma), Blasticidin S HCI (5502, Tocris), and Hygromycin B (sc-29067, Lab Force) and imaging-based screening.

MCF10A cells and knock-in/out derivatives were plated on Fibronectin (PanReac AppliChem) coated (0.25ug/cm2)  96 well 1.5 glass bottom plates (Cellvis) at 30 000 cells/ well density and allowed to adhere and form monolayer in growth media. Cells were starved for 48h before starting the experiments. In drug perturbation experiments, starved cells were imaged for 5h, then indicated drugs or vehicle (DMSO) control was added and imaging was resumed for 15h.

Imaging was done on an epifluorescence Eclipse Ti inverted fluorescence microscope (Nikon) controlled by NIS-Elements (Nikon) with a Plan Apo air 20X (NA 0.8) objective. Laser-based autofocus was used throughout the experiments. Image acquisition was performed with an Andor Zyla 4.2 plus camera at a 16-bit depth. Illumination was done with a SPECTRA X light engine (Lumencor) with the following excitation and emission filters (Chroma): far red (miRFP703): 640nm, ET705/72m; red (mRuby2): 555nm, ET652/60m; green/yellow (mNeonGreen): 508nm, ET605/52; cyan (mTurquoise2): 440nm, HQ480/40.

\subsubsection{Human stem cells}

Maintenance 
 WA09 (H9) hESC line was purchased from WiCell (wicell.org) and maintained in Essential 8 flex medium (A2858501, Thermo Fisher Scientific) on hESC-qualified growth factor reduced Geltrex-coated (A1413302, Thermo Fisher Scientific) 6 well plates. Cells were split into 6 well plates at 1:10 ratio when cells become confluent using 0.5 mM EDTA. Medium was changed according to the E8 flex protocol. 
 Time-lapse imaging and differentiation of hESCs
 Established multi-colour hPSC expressing ERKKTR-mClover, ORACLE OCT4tdtomato and H2BmiRFP were plated onto Geltrex-coated 24-well plates (P24-1.5H-N, CellVis) a day before imaging supplemented with Essential 8 flex medium and maintained in the incubator. Colonies were imaged using a Nikon Ti2 with a Yokogawa CSU-W1 spinning disk system. ERKKTR-mClover, ORACLE-OCT4tdtomato and H2BmiRFP were captured every 5min using 488, 561 and 642nm laser respectively using Nis Elements Nd acquisiation modality using 2x2 large image with a 10\% overlap and optimal path blending. To initiate differentiation to neuroectodermal lineage, essential 8 flex medium was changed to PSC neural induction medium (A1647801, Thermo Fisher Scientific) 4h prior to imaging and changed every other day.

\subsubsection{Optogenetic manipulation of 3-D MCF10A spheroids}
Mammary acini were grown from wild-type human female mammary epithelial MCF10A cells. The cells were stably modified by using the PiggyBac transposon system to express H2B-miRFP703, ERK-KTR-mRuby2 and Lyn-cytoFGFR1-PHR-mCit (OptoFGFR), as previously described\supercite{Ender2022}. For acini formation, MCF10A single-cell suspensions were mixed with 4 volumes of growth factor-reduced Matrigel (Corning) at 4$^{\circ}$ C and spread evenly on the surface of glass bottom cell culture plates at a concentration of $1.4 x 10^4 \frac{cells}{cm^2}$. The acini were cultured in DMEM/F12 supplemented with 2\% horse serum, 20 ng/ml recombinant human EGF, 0.5 mg/ml hydrocortisone, 10 mg/ml insulin, 200 U/ml penicillin and 200 mg/ml streptomycin. Horse serum, insulin and EGF were removed after 3 days of culture.  For live imaging, 25 mM Hepes was added to the medium prior to mounting on the microscope. Images of acini were acquired on an epifluorescence Eclipse Ti2 inverted fluorescence microscope (Nikon) equipped with a CSU-W1 spinning disk confocal system (Yokogawa) and a Plan Apo VC 60X water immersion objective (NA = 1.2). For time-lapse imaging, laser-based autofocus was used. Images were acquired with a Prime 95B or a Prime BSI sCMOS camera (both Teledyne Photometrics) at 16-bit depth. Temperature, CO2 and humidity were controlled throughout live imaging with a temperature control system and gas mixer (both Life Imaging Services). The following lasers were used for excitation: 638 nm for far red/miRFP  and 561 nm for red/mRuby2. For the optogenetic stimulation with OptoFGFR, acini were illuminated with wide field blue light (470 nm LED) for 100 ms at 50\% LED intensity at defined time points during spinning disc time-lapse imaging. Both microscopes were controlled by NIS elements (Nikon).

For the optogenetic stimulation with OptoFGFR, acini were illuminated with wide field blue light (470 nm LED) for 100 ms at 50\% LED intensity at defined time points during spinning disc time-lapse imaging.  Live z-stack images of individual acini were acquired every 5 minutes with a 0.6 µm z resolution.

\subsection{Segmentation and tracking}
\label{sect.SegTrack}

The segmentation and tracking used here are based on our previously developed LEVER (lineage editing and validation) tools for segmentation and tracking\supercite{LEVER,MAT,SCR_aging,SCR_lever,TMI_pixelRep,NM_ctc,NM_particles}. The segmentation includes a non-local means denoising\supercite{DIP_MATLAB}, followed by a thresholding and a separation of touching cells. The thresholding uses the SSF values from the H2B channel with an empirical threshold of 0.01 that is used across all of the movies analyzed to date. The thresholding and cell detection uses the H2B channel to identify cell centroids. For each additional image channel (e.g. ERK-KTR, AKT-KTR), we add the maximum magnitude of the positive and negative LoG responses from the thresholded regions identified on the H2B channel, (as in section \ref*{Sect.computeSSF}) and then use a watershed transform on the combined SSF channel images to separate among touching cells. This allows the KTR signal, if available, to assist in the most challenging of the segmentation tasks, separating adherent cells. The multi-temporal association tracking (MAT) \supercite{MAT,NM_particles} can be used to filter cells for inclusion in the SSF output image, but is used here primarily to automatically identify and correct segmentation errors \supercite{NM_cellFate,LEVER,SCR_lever}. The key outcome from the segmentation and tracking are the $(x,y,z,time)$ centroid locations for each cell. In the present work, segmentation and tracking is used to identify cell centroids at which to evaluate the metric structure enhancing filters and for computing cell velocity images. The approach is unsupervised, requiring no training data and taking as its only parameter a range of radii to be used with the LoG GPU filter to compute the SSF (section \ref*{Sect.computeSSF}), set at $[4:0.5:6] \mu m$ for all of the human stem and cancer cells movies analyzed here.

\subsection{Computing the cell signaling structure function}
\label{sect.computeSSF}
\label{Sect.computeSSF}
The cell signaling structure function (SSF) measures the cell signaling state as the intensity of the nuclear voxels w.r.t. the surrounding cytoplasm, useful for any cell imaging protocol, including the powerful new collection of biosensors known as kinase translocation reporters (KTRs)\supercite{Regot2014}. Current state-of-the-art approaches to computationally analyzing these KTR signals rely on the ratio of cytoplasmic to nuclear intensity, the cytonuclear ratio. The cytonuclear ratio is nonlinear, making it a poor choice for reporting cell signaling state. This limitation has been noted in previous work, requiring careful selection of acceptance regions for the KTR signals. An alternative approach to computing KTR activation follows from associating image channel intensities with reporter concentrations. Figure \ref*{fig:ssf} compares the SSF to the cytonuclear ratio. We consider images with intensity values normalized to [0,1]. Given the average cytoplasmic pixel intensity $c$ and the average nuclear pixel intensity  $n$, define the cell signaling structure function $H_{SSF}$ as in eqn. \ref*{eqn.SSF_1D}. The function $H_{SSF}$ is on [-1.0,1.0], and varies linearly with the nuclear and cytoplasmic intensities. In addition to having a linear response to variations in cytoplasmic and nuclear reporter concentrations, $H_{SSF}$ is equal to zero for cells that are unlabeled ($n$ = 0 and $c$ = 0). The $H_{SSF}$ can be robustly approximated via convolution with a Laplacian of Gaussian (LoG) filter. The LoG filter simultaneously estimates $n$ and $c$ while computing Eqn. \ref*{eqn.SSF_1D}.  The value of $H_{SSF}$ is found at each cell centroid and unlike the nuclear/cytoplasmic ratio does not require an accurate segmentation boundary. Finding the cell centroid is an easier task compared to finding the nuclear boundary \supercite{NM_ctc,TMI_pixelRep}.

The Laplacian of Gaussian (LoG) is a widely used blob enhancing filter \supercite{DIP_MATLAB}. The LoG combines the smoothing of a Gaussian filter with the edge enhancement of the Laplacian operator. We define the SSF using the LoG response on each imaging channel $\lambda \in \Lambda$ evaluated at each cell centroid and radius. Here we use the LoG response combined across all imaging channels for the segmentation. Any negative LoG response on the H2B channel is considered foreground. Details of the segmentation are given in the Methods section. The cell signaling structure function $H_{SSF}$ can be written as 

\begin{equation}
    H_{SSF}(\lambda\in\Lambda)=\frac{LoG(x_c,y_c,z_c,\lambda,t,r_{LoG})}{{LoG}_{ref}(r_{SEG})}.
    \label{eq:hSSF}
\end{equation}

$LoG_{ref}$ is the maximum value obtained filtering a zero-intensity spheroid of radius $r_{SEG}$ against a full intensity background. The locations $x_c,y_c,z_c,t$ are cell centroids identified by the segmentation algorithm. In practice, we omit the normalization by $LoG_{ref}$ as this is a constant term due to the multi-resolution LoG implementation as described below, and allow the normalization to [-1.0,1.0] to happen during the quantization step as in Section \ref*{Sect.Quantization}.

The Laplacian of Gaussian filter is a blob enhancing filter that combines a Gaussian smoothing with a Laplacian edge detection. We recently developed a GPU-based implementation of the Laplacian of Gaussian filter that works in 3-D using NVIDIA's CUDA parallel programming toolkit \supercite{HIP}. This filter is scale invariant, meaning that its output remains similar across different radii. This implementation uses axis-aligned spherical approximations to compute the blob response efficiently at every voxel in the image. It is possible to compute the LoG response for arbitrary elliptically oriented blobs, but the extra computational requirements have not been needed for the applications considered to date. 
The LoG filter takes a single parameter of radius. The radius of the filter relates to the standard deviation of the underlying kernel as $r=\sigma\ast\sqrt d$ where $d$ is 2 for 2-D images and 3 for 3-D images. Define a Gaussian kernel $G(x,y,z)$, 

\begin{equation}
    G(x,y,z)=\frac{1}{\sqrt{(2\ \pi)^d\sigma_x^2\sigma_y^2\sigma_z^2}}\exp^{-\frac{1}{2}((\frac{x}{\sigma_x})^2+(\frac{y}{\sigma_y})^2+(\frac{z}{\sigma_z})^2)}.
\end{equation}

Then write the scale invariant Laplacian of Gaussian as 

\begin{equation}
    LoG(x,y,z)={((x,y,z)}^T\Lambda^{-2}(x,y,z)-d)G(x,y,z)
    \label{eq:logInvariant}
\end{equation}

where

\begin{equation}
    \Lambda=\left[\begin{matrix}\sigma_x&0&0\\0&\sigma_y&0\\0&0&\sigma_z\\\end{matrix}\ \ \right] .
\end{equation}

\ref*{eq:logInvariant} is efficient to compute because we omit the covariance terms so $LoG(x,y,z)$ can be computed as a combination of 1-D LoG and Gaussian kernels across the d-dimensional image as 
\begin{multline}
    LoG(x,y,z)=LoG(x)G(y)G(z) 
    +LoG(y)G(x)G(z) 
    +LoG(z)G(y)G(x) 
\end{multline}

This implementation is faster compared to computing a full 3-D kernel as would be required for non-diagonal covariance matrices representing non-axis aligned ellipses. The response $LoG(x,y,z)$ is normalized so that the kernel always sums to zero (even for kernels that protrude from the image) reducing filtering artifacts at image boundaries.

\subsection{Quantizing SSF output images}
\label{Sect.Quantization}
The \emph{cell signaling structure function} (SSF) is a metric function (see Section \ref*{Sect.computeSSF}) that inputs a 3-D image and a 3-element spatial scale parameter representing the radius on each axis. At each cell centroid location, the SSF value for that cell at the given spatiotemporal location and channel is written. For 2-D+time movies, the 3-D SSF output is an exact representation of the 2-D+time movie. For 3-D+time movies, as in the optogenetic excitation of 3-D human breast epithelial cell (MCF10A) spheroids, a maximum intensity projection (MIP) along the $z$ axis reduces the spatial dimensions to 2-D so the SSF output images can be compressed using the 3-D FLIF compressor. We use the MIP on the $z$ axis because of the lower resolution in this direction due to imaging anisotropy. This 3-D output image is then one of two inputs to compute the NCD using the FLIF lossless 3-D image compression. 

The FLIF compression requires 8-bit input images, so the SSF output images must be quantized. Quantization is also required for color mapping the SSF values for visualization, as in Figure \ref*{fig:figure1} (B) and (C) and the supplementary figures. For a given experiment containing $N$ 5-D movies, we generate $N$ 3-D SSF images and extract all non-zero SSF voxels into a multiset $V$. The quantization bins are then defined as 254 linearly spaced boundaries on $(\mu(V)-\sigma(V),\mu(V)+\sigma(V))$, where $\mu(V)$ is the average SSF value and $\sigma(V)$ is the standard deviation. The quantization maps the SSF values to $[1,255]$, reserving the value of zero to indicate an absence of signaling. When assigning a colormap to the the quantized value, the `parula' colormap \supercite{DIP_MATLAB} that is widely available has a bimodal appearance helpful in visualizing the 2-D SSF as a combined color signal and has been used throughout. The FLIF compression algorithm is color agnostic, relying only on entropy calculations on the 8-bit quantized SSF imgaes for pattern detection. The FLIF compression also supports RGB input images, allowing us to compute the NCD between movies with up to three imaging channels simultaneously, but the results to date only utilize single channel inputs to the NCD.

\subsection{The cluster structure function in the RKHS}
\label{Sect.CSF_methods}
The cluster structure function measures how meaningfully a given partition into clusters represents an input dataset. The theory is Kolmogorov complexity, an absolute measure of information content within and between digital objects. For brevity, we show here only the compression approach used to compute the CSF and omit the theory background, for details consult the recent papers and the textbook\supercite{Vitanyi2008,Cohen2023,NCDM}. Our input is a collection of $N$ 5-D $(x,y,z,channel,time)$ microscopy movies $X = (x_1,...,x_N)$. Given a partitioning of $X$ into $K$ clusters as $X = {Y_1,...,Y_K}$, with the $i^{th}$ cluster $Y_i = (y_1,...y_m)$. For each $y_i$, write the optimality deficiency $\delta(Y,y_i)=Z(Y)+log(|Y|)-Z(y_i)$, where $Z(Y)$ is the size in bytes of the compressed $Y$ and $|Y|$ is the cardinality of Y. The average of the optimality deficiency for each of the $m$ elements in the cluster is computed, and that average is taken again across all $K$ clusters to compute a mean and standard deviation of the per cluster optimality deficiencies. When used for 2-D images or smaller 3-D images, the approach is to minimize this CSF in order to select the optimal value of $K$, addressing in an absolute sense  the optimal number of clusters in the given dataset. 

We compute and use the CSF in this work using a different approach than originally proposed\supercite{Cohen2023}. For the 2-D MCF10A human breast epithelium monolayer movies, there are 24 or 25 movies from each genetic condition. To compute $Z(Y)$ for this data would require compressing the concatenation of all those movies, intractable due to the image sizes involved. Instead of compressing the movies in each cluster together, we use the pairwise NCD results to embed the movies in an RKHS and then  compute the CSF in the RKHS as follows. For each cluster in the RKHS $Y'=(y'_1,...y'_m)$, we compute the centroid of the points in that cluster as a point in $\mu_{Y'}=\frac{\sum{y'_i}}{m}$. For each optimality deficiency, we use the Euclidean distance between each of the points representing individual movies for the optimality deficiency, $\delta(Y',y_i') = |y_i-\mu_{Y'}|$. We compute the CSF for every input movie, and then test  statistically significant differences between CSF values, as in ERK vs. AKT for the 2-D MCF10A human breast epithelial cells, using the non-parametric Wilcoxon signed-rank test for significant differences of paired medians \supercite{Theodoridis2009} for the $p$-values reported throughout.

Our application of the CSF in the present work is also used to evaluate how meaningful a given embedding is, rather than the originally proposed use for selecting the optimal number of clusters. Instead of clustering in the RKHS space, we use the ground truth labels for each movie as the partitioning of the data, and choose as $K$ the true number of classes from the experimental conditions. Then for the given ground truth partitioning and $K$ we compute the CSF as a criterion function to evaluate the effectiveness of a particular input, \emph{e.g.} as in evaluating how meaningful ERK vs. AKT in inducing a given structuring of the data for the human breast epithelial cells in Section \ref*{sect.mcf10a_results}.

\section{Acknowledgements}
The authors are grateful to Ben Ho Park for providing AKT1E17K, PIK3CAE545K, PIK3CAH1047R knockin, and PTEN(-/-) knockout MCF10A - derived cell lines\supercite{Gustin2009}. The authors are also grateful to Rafael Carazo Salas and Sungmin Kim from the Univ. Bristol, UK, for the imaging constructs used in the hSC dataset.  This project has received funding from the European Union's Horizon 2020 research and innovation program under the Marie Skłodowska-Curie grant agreement No 896310 to Agne Frismantiene. We acknowledge support of the Microscopy Imaging Center of the University of Bern (https://www.mic.unibe.ch/). Portions of this work were supported by Human Frontiers Science Grant RGP0043/2019 (Pertz/Cohen). 

\section{Software and data availability}
All of the software tools used are available free and open source, see \url{https://git-bioimage.coe.drexel.edu/opensource/ssfCluster}.
The image data together with segmentation and tracking results can be viewed interactively at \url{https://leverjs.net/ssfCluster}. The LEVERSC 4-D WEBGL viewer \cite[]{Winter} renders 3-D movies together with the SSF output images. This web API also supports downloading metadata and results directly. 
\label{Sect.Software}

\printbibliography

\newpage
\renewcommand{\figurename}{Supplementary Figure}
\setcounter{figure}{0}  

\begin{figure*}[]
    \centering
    \begin{subfigure}[b]{0.95\textwidth}    
      \includegraphics[width = \textwidth,page=1]{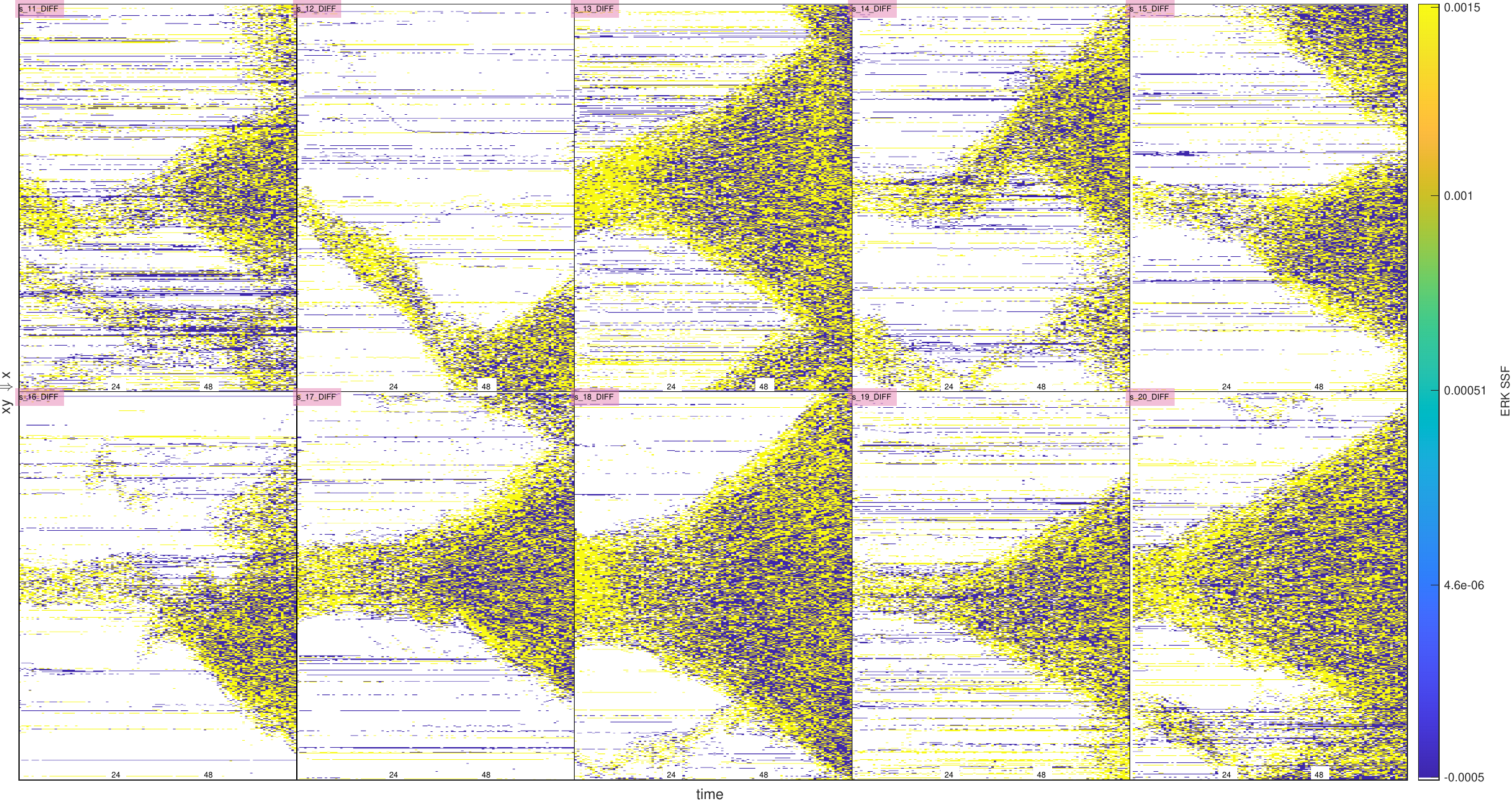}
    \caption{
      }
    \end{subfigure}
    \begin{subfigure}[!htb]{0.95\textwidth}    
      \includegraphics[width = \textwidth,page=2]{figures/hsc_erk_kymos_condition_equiprob.pdf}
      \caption{
      }
    \end{subfigure}
    \caption{2-D projections of 3-D (2-D+time) SSF outputs for movies showing ERK signaling in colonies of human stem cells for 10 differentiated (a) and 10 self-renewing (b) movies. The vertical axis represents the spatial dimension, and is obtained by taking a maximum intensity projection along the second spatial axis. The horizontal axis in each panel represents time. These quantitative visualizations show ERK patterns throughout the process of colony development. The 2-D visualization here is lower dimensional than the 3-D SSF image that is a lossless representation input to the compression distance for embedding. The full dataset can be seen here: \url{https://leverjs.net/ssfCluster/HSC}.
    }
    \label{Fig.Supplement.HSC_kymos}
    \end{figure*}
    
\begin{figure*}[!hbt] 
  \centering
  \begin{subfigure}[b]{0.95\textwidth}  
  \includegraphics[width = \textwidth]{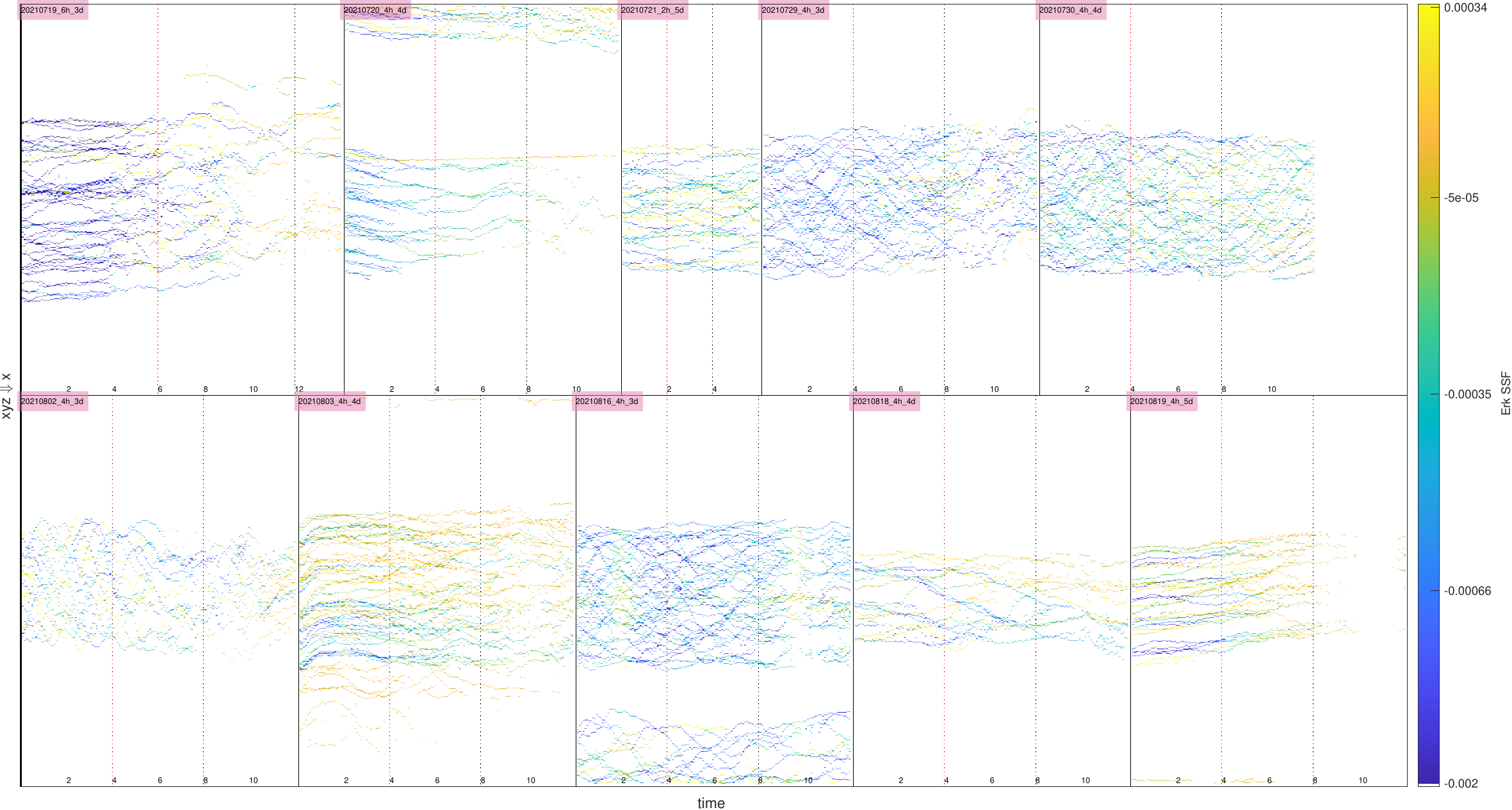}
  \caption{}
  \end{subfigure}
  \begin{subfigure}[b]{0.95\textwidth}  
    \includegraphics[width = \textwidth]{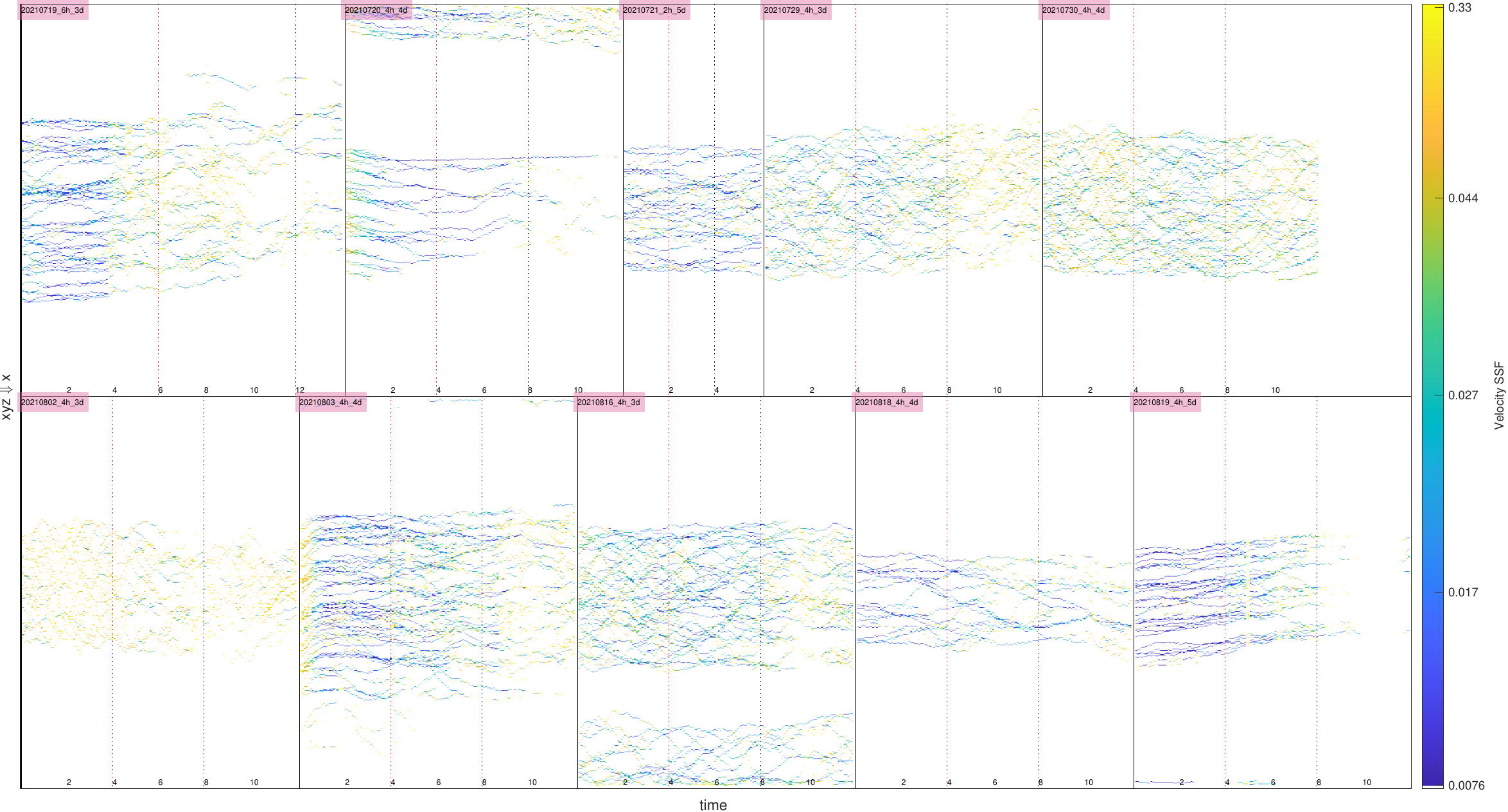}
    \caption{}
    \end{subfigure}
    
  \caption{
  2-D projections of 3-D+time ERK (a) and velocity (b) SSF output images for 3-D+time movies of mammary acini, spheroids of human female mammary epithelial MCF10A cells. The original 3-D+time movies are processed with a 3-D SSF via a maximum intensity projection along the $Z$ spatial axis to form the input to the compression. For the 2-D rendering shown here, the spatial component is again projected via maximum intensity along the $Y$ axis. Each movie is labeled by the time (hours) before optogenetic excitation, the time (hours) that optogenetic excitation lasts (pulses every 30 minutes), and the age of the organoid (days). The dashed vertical lines indicate the beginning and end of the optogenetic excitation. The 2-D projections shown here are useful for human visualization, the 3-D SSF output images are input the FLIF compression algorithm to compute the pairwise NCD to generate the reproducing kernel Hilbert space embedding (\ref*{Fig.opto}). The optogenetic excitation dataset can be seen here: \url{https://leverjs.net/ssfCluster/optoGenetic}. 
  } 
  \label{Fig.Supplement.opto_kymos}
\end{figure*} 

\begin{figure*}[!hbt] 
  \includegraphics*[width=\textwidth]{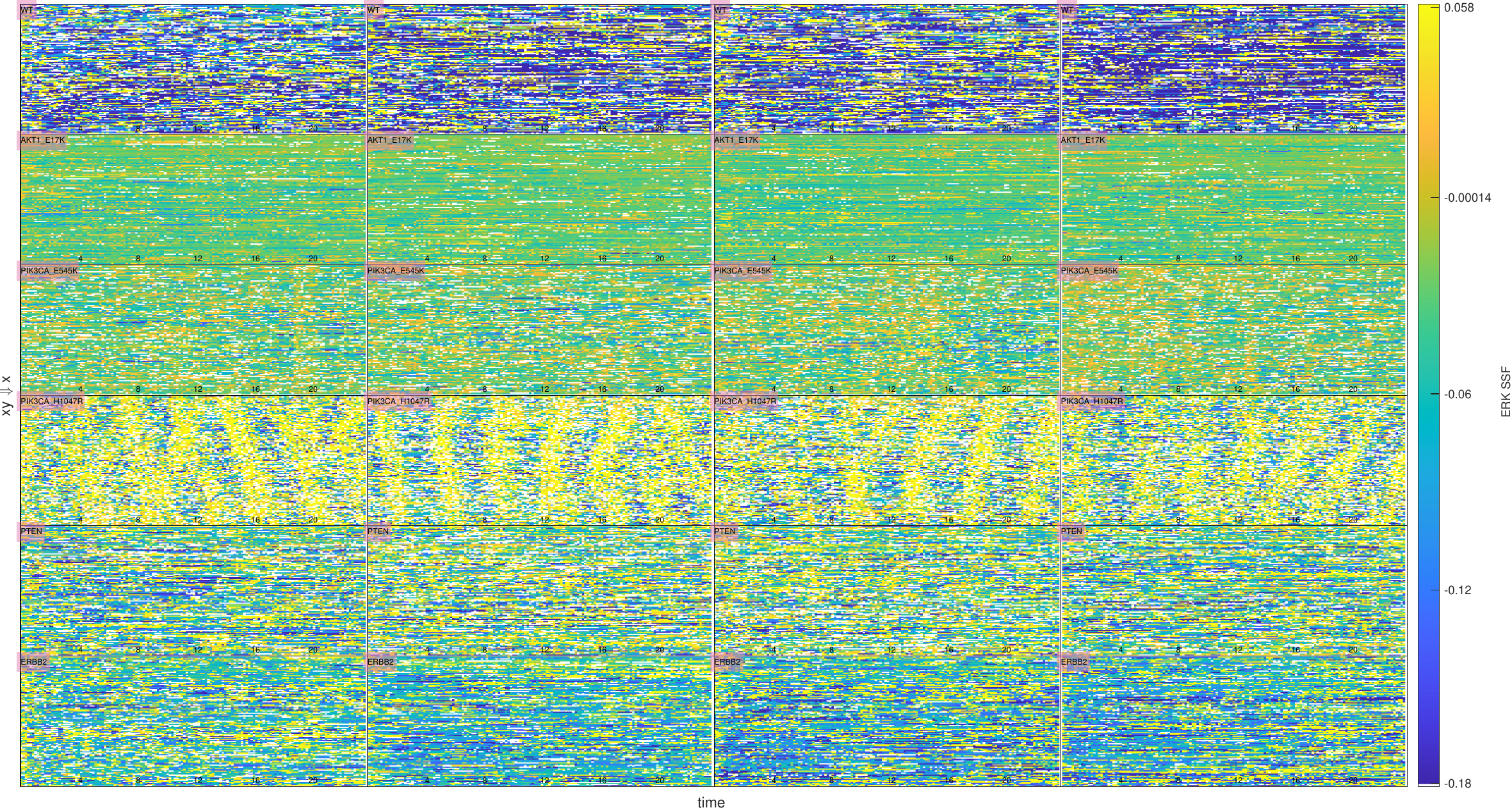}
  \centering
  \caption{
    2-D projections of 3-D SSF output images for 2-D+time movies showing live monolayers of human breast epithelial cells (MCF10A). These 24 movies are from one of six imaging experiments, showing five oncogenic mutations plus wild type (one each per row), also shown as an RKHS embedding in Figure \ref*{fig:figure1} (D). The full dataset contains 147 movies, and can be viewed here: \url{https://leverjs.net/ssfCluster/MCF10A_2D}. 
  }   
  \label{Fig.Supplment.ssf_kymos}
\end{figure*}

\renewcommand{\figurename}{Supplementary Movie}
\setcounter{figure}{0}  
\begin{figure*}  
  \caption{   
  Animated version of Figure \ref*{fig:figure1}. A time-lapse movie showing ERK-KTR signaling in a monolayer of human breast epithelial cells (MCF10A) from the PIK3CA\_H1047R mutation with cellular activation indicated by dark nuclei against bright cytoplasm clearly propagating across the image (left panel). The 3-D SSF metric image filter output (center panel) is the input to the FLIF 3-D compression used with the normalized compression distance to define the RKHS embedding, shown here with the current image frame overlaid in gray. The 2-D projection of the 3-D SSF output image (right) panel facilitates human visualization, with the current timepoint indicated by the red line and the signaling patterns clearly visible as diagonal yellow stripes of activation across the monolayer \href{https://bioimage.coe.drexel.edu/media/ssfClustering/movie1.mp4}{(Link)}.
  \label{Movie1}
  }
\end{figure*}
  
\begin{figure*}  
  \caption{   
  Rotating view of 3-D SSF output image from Figure \ref*{fig:figure1} (B). The 3-D SSF computes the signaling activation  at each cell centroid location $(x,y,time)$. The normalized compression distance finds patterns of similarity between SSF output image pairs. Shown here is an SSF output image for the PIK3CA\_H1047R oncogenic mutation known for distinct ERK signaling patterns,  \href{https://bioimage.coe.drexel.edu/media/ssfClustering/movie2.mp4}{(Link)}.
  \label{Movie2}
  }
\end{figure*}

\end{document}